\documentclass{article}

\usepackage{microtype}
\usepackage{graphicx}
\usepackage{subfigure}
\usepackage{booktabs} 

\usepackage{hyperref}

\usepackage[accepted]{icml2025}

\usepackage{amsmath}
\usepackage{amssymb}
\usepackage{mathtools}
\usepackage{amsthm}

\usepackage[capitalize,noabbrev]{cleveref}

\theoremstyle{plain}
\newtheorem{theorem}{Theorem}[section]

\theoremstyle{definition}

\theoremstyle{remark}

\usepackage[textsize=tiny]{todonotes}

\usepackage{bbding}
\usepackage{pifont}
\usepackage{wrapfig}
\usepackage{scalefnt}
\usepackage{scalerel}
\usepackage{tikz}
\usepackage{fontawesome}
\usepackage{soul}

\usepackage{multirow}

\definecolor{secolor}{rgb}{0.4,0.4,0.4}

\usepackage{colortbl}
\definecolor{Gray}{gray}{0.925}

\definecolor{shadecolorab}{rgb}{0.95,0.95,0.95}

\usepackage{enumitem}
\setlist{leftmargin=3mm}

\icmltitlerunning{Divide and Conquer: Accelerating Diffusion-Based Large Language Models via Adaptive Parallel Decoding}

\begin{document}

\twocolumn[
\icmltitle{Divide and Conquer: Accelerating Diffusion-Based \\ Large Language Models via Adaptive Parallel Decoding}

\icmlsetsymbol{equal}{*}

\begin{icmlauthorlist}
\icmlauthor{Xiangzhong Luo}{seu}
\icmlauthor{Yilin An}{seu}
\icmlauthor{Zhicheng Yu}{seu}
\icmlauthor{Weichen Liu}{ntu}
\icmlauthor{Xu Yang}{seu}
\end{icmlauthorlist}

\icmlaffiliation{seu}{Southeast University}
\icmlaffiliation{ntu}{Nanyang Technological University}
\vspace{0.3in}
]

\printAffiliationsAndNotice 

\begin{abstract}
Diffusion-based large language models (dLLMs) have shown promising performance across various reasoning tasks, establishing themselves as an alternative to autoregressive large language models (LLMs). Unlike autoregressive LLMs that generate one token per step based on all previous tokens, dLLMs theoretically enable parallel generation of multiple tokens at each decoding step. However, recent dLLMs still favor one-token-per-step generation in practice, as directly decoding multiple masked tokens often leads to degraded generation quality and stability. This reveals a substantial gap between the \textit{theoretical parallelism} and \textit{practical performance} of dLLMs. To bridge this gap, we introduce an adaptive parallel decoding approach, namely \textit{DiCo}, which features a three-phase \textit{divide-and-conquer} paradigm to unleash the inherent parallelism of dLLMs. \ding{182} During the \textit{Divide} phase, \textit{DiCo} first explores the input masked sequence and identifies masked tokens as seed tokens, which are then expanded to construct a set of local clusters. \ding{183} During the \textit{Conquer} phase, \textit{DiCo} performs parallel decoding across different local clusters as constructed in the \textit{Divide} phase. The \textit{divide–and-conquer} process repeatedly alternates between the \textit{Divide} and \textit{Conquer} phases until convergence. \ding{184} During the \textit{Finalize} phase, \textit{DiCo} decodes the remaining few masked tokens using an effective fine-grained compound decoding scheme to finalize the generation. Extensive experiments demonstrate that \textit{DiCo} can deliver significant inference speedups while maintaining competitive generation quality.

\end{abstract}

\section{Introduction}
\label{sec:introduction}

Large language models (LLMs) have recently achieved impressive success across various real-world scenarios, such as reasoning, text generation, and information retrieval~\cite{dubey2024llama, achiam2023gpt, guo2025deepseek}. Despite their success, current state-of-the-art (SOTA) LLMs largely rely on autoregressive (AR) decoding, where each token must be generated in sequence based on all previous tokens (i.e., one token per step)~\cite{li2024survey}. This inherently sequential process inevitably suffers from limited parallelism, which translates into high latency and low throughput~\cite{li2024large}.

A recent line of research to tackle this dilemma is diffusion-based large language models (dLLMs)~\cite{sahoo2024simple, nie2025large, ye2025dream}. In contrast to AR LLMs, these dLLMs feature an iterative denoising paradigm to generate text over discrete decoding steps, where a fully masked initial sequence is progressively transformed into a fully unmasked final output~\cite{nie2025large}. This paradigm theoretically enables parallel generation of multiple tokens at each decoding step, which thus offers a promising pathway towards substantially faster inference while maintaining strong generation quality~\cite{li2025survey-dllm}.

\begin{figure}[t]
  \centering
  \includegraphics[width=1.0\linewidth]{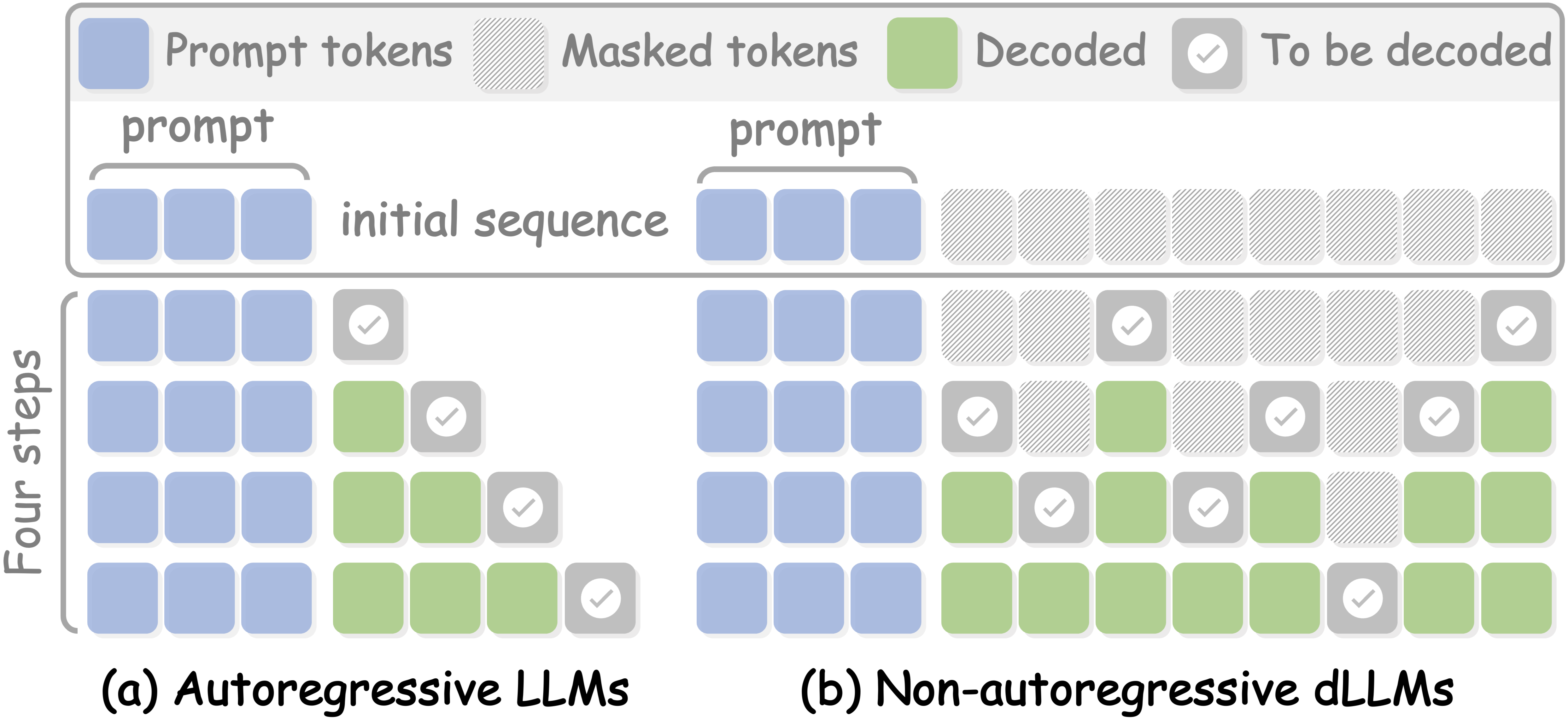} 
  \vspace{-10pt}
  \caption{Comparisons between autoregressive LLMs and non-autoregressive dLLMs, where each box denotes a specific token.}
  \label{fig:llm-vs-dllm} 
\end{figure}

Despite their theoretical potential for parallel generation, current representative dLLMs~\cite{nie2025large, ye2025dream} still favor generating tokens one at a time (i.e., one token per decoding step) in practice to maintain generation quality and stability compared to AR LLMs~\cite{israel2025accelerating}. Our preliminary experiments further confirm that the best generation quality is typically achieved when dLLMs update only one token at each decoding step. In contrast, directly leveraging the inherent parallelism of dLLMs to generate multiple tokens per decoding step often leads to degraded generation quality~\cite{wu2025fast}. These findings reveal a substantial gap between the theoretical parallelism and practical performance of dLLMs.

To bridge the above gap, we propose a \textbf{training-free} adaptive parallel decoding approach, dubbed \textit{DiCo}, which leverages a \textit{divide-and-conquer} paradigm to unleash the inherent parallelism of dLLMs and accelerate dLLM inference. \textit{DiCo} consists of the following three phases. \ding{182} During the \textit{Divide} phase, \textit{DiCo} first explores the input sequence and identifies a subset of masked tokens as seed tokens, which are then expanded to form a set of local clusters. \ding{183} During the \textit{Conquer} phase, \textit{DiCo} performs adaptive parallel decoding across different local clusters. The \textit{divide–and-conquer} process repeatedly alternates between the \textit{Divide} and \textit{Conquer} phases until convergence. \ding{184} During the \textit{Finalize} phase, \textit{DiCo} decodes the remaining few masked tokens using an effective fine-grained compound decoding scheme to finalize the generation. Extensive experiments demonstrate that \textit{DiCo} can deliver significant inference speedups while maintaining competitive generation quality.

To summarize, this paper makes the following contributions:
\begin{itemize}
    \item {
    \textbf{Motivation.}
    Our theoretical analysis and empirical results reveal a substantial gap between the theoretical parallelism and practical performance of dLLMs, which motivates the necessity to develop an effective parallel decoding approach for dLLMs to bridge this gap.
    }
    \item {
    \textbf{Solution.}
    We introduce \textit{DiCo}, a training-free adaptive parallel decoding approach that leverages a \textit{divide-and-conquer} paradigm to iteratively perform parallel decoding across the input masked sequence, which can deliver significant inference speedups while also maintaining competitive generation quality and stability.
    }
    \item {      
    \textbf{Evaluation.}
    We conduct extensive experiments to evaluate \textit{DiCo} on various representative dLLMs and datasets, which clearly show the efficacy of \textit{DiCo} in terms of both inference efficiency and generation quality.
    }
\end{itemize}

\section{Related Works}
\label{sec:related-works}

\textbf{Diffusion-based large language models.}
Building on diffusion models in continuous domains~\cite{croitoru2023diffusion-survey}, recent studies have extended diffusion models to discrete language modeling~\cite{sahoo2024simple, shi2024simplified, nie2024scaling}. Unlike AR LLMs that generate tokens sequentially~\cite{dubey2024llama, achiam2023gpt, guo2025deepseek}, dLLMs feature an iterative denoising process, which can enable bidirectional context modeling and theoretically support parallel decoding~\cite{li2025survey-dllm}. More recently, several dLLMs, such as LLaDA~\cite{nie2025large, zhu2025llada-v1.5, zhu2025llada-moe} and Dream~\cite{ye2025dream}, have shown competitive performance across various tasks, establishing themselves as a promising alternative to AR LLMs. However, directly leveraging the theoretical parallelism of dLLMs to generate multiple tokens in parallel often leads to degraded generation quality in practice~\cite{wu2025fast}. This reveals a substantial gap between the theoretical parallelism and practical performance of dLLMs.

\textbf{Parallel decoding for dLLMs.}
Recent efforts have explored parallel decoding for dLLMs, which can be divided into training-based~\cite{bao2025learning, chen2025dparallel, wu2025fast-dllmv2} and training-free~\cite{wu2025fast, wu2025free, gao2025self} categories. Among them, \cite{bao2025learning, chen2025dparallel, wu2025fast-dllmv2} introduce additional training to learn decoding policies for dLLMs, which, however, incurs substantial training cost and greatly limits their applicability. In addition, \cite{wu2025fast, wu2025free, gao2025self} follow block-wise decoding---also known as semi-autoregressive (semi-AR) decoding---and propose tailored training-free parallel decoding strategies. However, these training-free parallel decoding methods still suffer from inherent limitations due to their block-wise nature~\cite{huang2025pc}. More recently, \cite{wu2025free, gao2025self} integrate speculative decoding~\cite{leviathan2023fast} into dLLMs to enable parallel generation. Nonetheless, since dLLMs do not naturally support batch inference~\cite{zhu2025llada-moe}, these speculative decoding-based methods fail to deliver realistic inference speedups.

\textbf{Approximate KV Caching for dLLMs.}
Unlike AR LLMs, dLLMs employ bidirectional attention, which prevents them from directly leveraging the standard KV cache~\cite{li2025survey-dllm}. To mitigate this limitation, recent work has found that the KV states in dLLMs exhibit strong temporal consistency across adjacent decoding iterations. Based on this finding, several approximate KV caching methods~\cite{liu2025dllm-cache, ma2025dkv-cache, wu2025fast, jiang2025d} have been proposed for dLLMs. For instance, dLLM-Cache~\cite{liu2025dllm-cache} separates the sequence into a prompt portion and a generated portion, and refreshes their corresponding KV states with different update schedules. dKV-Cache~\cite{ma2025dkv-cache} adopts a delayed update strategy, where the KV states of newly generated tokens are recorded in the following decoding step. Fast-dLLM~\cite{wu2025fast} combines block-wise semi-AR decoding with selective caching, retaining KV states for all tokens except those within the actively decoded block.
\section{Preliminaries}
\label{sec:preliminaries}

\subsection{Generation Process of dLLMs}
\label{sec:generation-process-of-dllms}

\begin{figure}[t]
  \centering
  \includegraphics[width=1.0\linewidth]{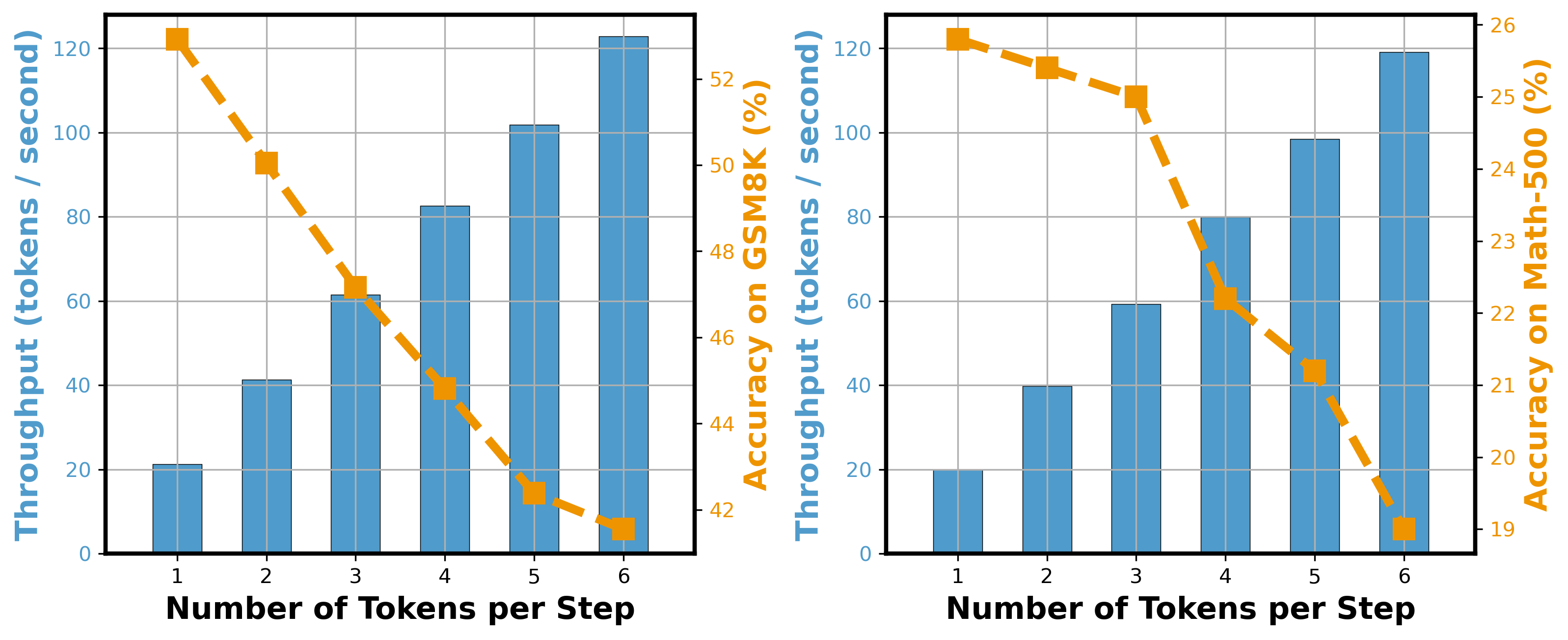} 
  \vspace{-10pt}
  \caption{Illustration of the performance collapse of naive parallel decoding on GSM8K (\textit{left}) and Math-500 (\textit{right}).}
  \label{fig:parallel-decoding-observation} 
\end{figure}

Diffusion-based large language models (dLLMs) leverage an iterative denoising paradigm to generate text over $T$ discrete steps, where a fully masked initial sequence is gradually transformed into a fully unmasked final output~\cite{li2025survey-dllm}. Formally, we use $\mathcal{V}$ to denote the vocabulary, which includes a special masked token $\texttt{[MASK]}$. The generation process begins with an initial sequence $\mathbf{x}^{T} \in \mathcal{V}^L$ of length $L$, which consists of a static prompt segment $\mathbf{p} = [p_0, \ldots, p_{m-1}]$ of $m$ tokens and a dynamic response segment $\mathbf{r}_0 = [\texttt{[MASK]}, \ldots, \texttt{[MASK]}]$ of $n = L - m$ masked tokens as follows:
\begin{equation}
\mathbf{x}^{T} = (\mathbf{p}, \mathbf{r}) = [p_0, \ldots, p_{m-1}, \texttt{[MASK]}_{0}, \ldots, \texttt{[MASK]}_{n-1}].
\label{eq:input-masked-sequence}
\end{equation}
At each step $t \in \{T-1, \ldots, 0\}$, a mask predictor $f_{\theta}(\cdot)$—typically a transformer with bidirectional attention—is employed to estimate the output distribution for each position in the sequence as $\mathbf{y}^t=f_{\theta}(\mathbf{x}^t)$. Then, greedy decoding is performed on $\mathbf{y}^t$ to derive the predicted tokens $\hat{\mathbf{x}}^t$ for all masked positions as follows:
\begin{equation}
\hat{\mathbf{x}}_i^t 
= \mathop{\text{arg\,max}}\nolimits_{v \in \mathcal{V}} \; \text{Softmax}(\mathbf{y}_i^t), 
\quad \text{if } \mathbf{x}_i^t = \texttt{[MASK]}.
\end{equation}
Furthermore, the transition function $\mathcal{G}(\cdot)$ selectively updates masked tokens in $\mathbf{x}^t$ based on $\hat{\mathbf{x}}^t$ to generate $\mathbf{x}^{t-1}$ as:
\begin{equation}
\mathbf{x}^{t-1} = \mathcal{G}(\mathbf{x}^t, \hat{\mathbf{x}}^t), \; \text{where} \; \mathbf{x}_i^{t-1} = \begin{cases}
        \hat{\mathbf{x}}_i^t, & \text{if }\; i \in S_t \\
        \mathbf{x}_i^t, & \text{otherwise}
    \end{cases}
\end{equation}
where $S_t$ is a subset of masked positions sampled from $\mathbf{x}^t$ using a specific sampling strategy (e.g., random or confidence-based sampling~\cite{nie2025large}). After $T$ steps, the output sequence $\mathbf{x}^0$ contains no masked tokens, marking the completion of the generation process~\cite{nie2025large}.

\subsection{Theoretical Inspirations}
\label{sec:theoretical-inspirations}

Although dLLMs theoretically enable parallel generation of multiple tokens within a single decoding step, recent dLLMs~\cite{nie2025large, ye2025dream} still tend to generate one token per step in practice in order to preserve generation quality and stability. This discrepancy reveals a substantial gap between the \textit{theoretical parallelism} and \textit{practical performance} of dLLMs. Below we first provide a theoretical analysis on this gap.

\begin{figure}[t]
  \centering
  \includegraphics[width=1.0\linewidth]{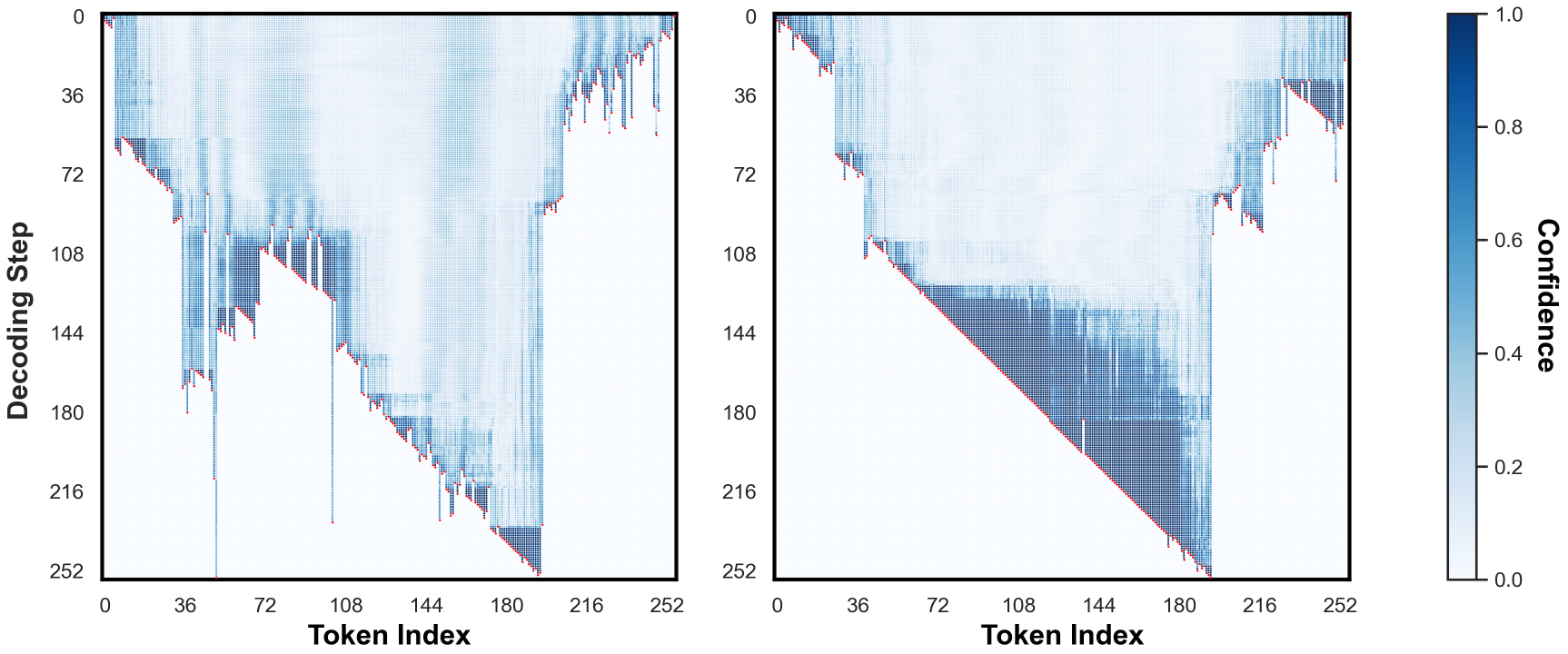} 
  \vspace{-10pt}
  \caption{Visualization of two representative decoding trajectories of LLaDA-8B-Instruct on GSM8K. The blue points represent the predicted confidence of masked tokens, while the red points denote unmasked tokens. (best viewed when zoomed in)}
  \label{fig:decoding-cluster-observation} 
\end{figure}

\textbf{Pitfalls of naive parallel decoding.}
Recent dLLMs~\cite{nie2025large, ye2025dream} tend to generate one token per step to preserve generation quality, which can be interpreted as modeling the decoding trajectory $\mathbf{x}$ via its joint probability mass function (PMF) over all decoded tokens:
\begin{equation}
    P_{joint}(\mathbf{x}) = P(\mathbf{x}_0) \prod\nolimits_{i=1}^{n-1} P(\mathbf{x}_{i} | \mathbf{x}_0, \ldots, \mathbf{x}_{i-1}),
    \label{eq:joint-PMF}
\end{equation}
where $P(\mathbf{x}_{i} | \mathbf{x}_0, \ldots, \mathbf{x}_{i-1})$ is the conditional probability of generating the $i$-th token based on all previously decoded tokens. However, in naive parallel decoding, the decoding trajectory $\mathbf{x}$ is approximated as the product of the marginal PMFs over all decoded tokens:
\begin{equation}
    P_{marginal}(\mathbf{x}) = \prod\nolimits_{i=0}^{n-1} P(\mathbf{x}_{i}),
    \label{eq:marginal-PMF}
\end{equation}
where $P(\mathbf{x}_i)$ corresponds to the marginal PMF of the $i$-th token. This formulation implicitly assumes that all tokens are mutually independent, i.e., $P(\mathbf{x}_0, \ldots, \mathbf{x}_i) = P(\mathbf{x}_{i} | \mathbf{x}_0, \ldots, \mathbf{x}_{i-1})$. However, this assumption rarely holds in natural language generation, where strong contextual dependence exists among adjacent tokens within local context. As a result, naive parallel decoding may deviate from the true data distribution, thus producing incoherent or low-quality text~\cite{wu2025fast, wu2025fast-dllmv2}.

{\faLightbulbO}
\textbf{Inspirations.}
The above theoretical analysis suggests that the fundamental limitation of naive parallel decoding arises from its implicit assumption of independence among tokens. Fortunately, this also indicates that, when certain masked tokens are conditionally independent---their generation does not depend on other masked tokens in the context---it becomes theoretically valid to decode them in parallel without deviating from the true data distribution.

\subsection{Empirical Observations}
\label{sec:empirical-observations}

\begin{figure*}[t]
  \centering
  \includegraphics[width=1.0\linewidth]{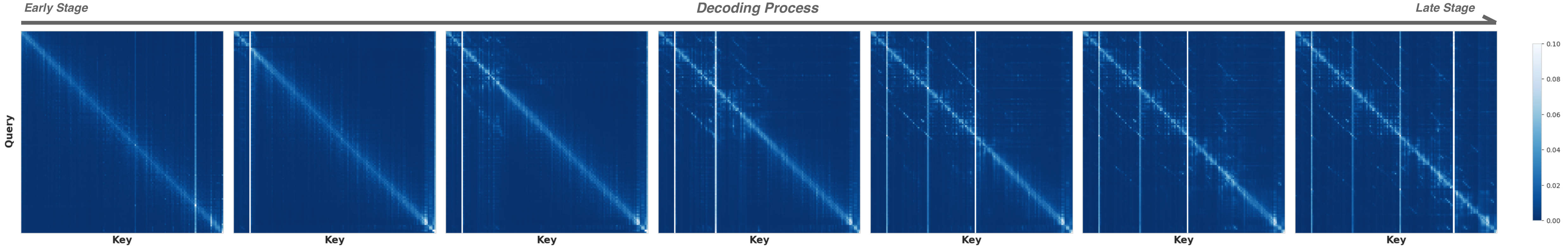} 
  \vspace{-10pt}
  \caption{Visualization of attention patterns in LLaDA-8B-Instruct during the decoding process (accumulated on GSM8K).}
  \label{fig:token-dependency-observation} 
\end{figure*}

\textbf{Observation I.}
\textit{The naive parallel decoding in dLLMs suffers from substantial accuracy loss.}
As shown in Section~\ref{sec:theoretical-inspirations}, we analyze the pitfalls of naive parallel decoding from a theoretical perspective. Below we further present experimental evidence to support the above theoretical analysis. Specifically, we take LLaDA-8B-Instruct as an example and evaluate its performance on two math reasoning tasks under zero-shot settings, where the generation length is fixed to 256. During this process, multiple masked tokens are decoded at each step following a top-$k$ confidence-based parallel decoding scheme~\cite{nie2025large}. As shown in Figure~\ref{fig:parallel-decoding-observation}, naive parallel decoding, despite being able to enhance inference throughput, suffers from significant accuracy loss on both GSM8K and Math-500.

\textbf{Observation II.}
\textit{The dependency among masked tokens in dLLMs evolves from local sparsity to global density as decoding progresses.}
To understand the decoding behaviors of dLLMs, we begin by analyzing their attention patterns, where we consider LLaDA-8B-Instruct and visualize its attention patterns on GSM8K during decoding. As shown in Figure~\ref{fig:token-dependency-observation}, the dependency among masked tokens remains highly sparse in the early and middle decoding stages---masked tokens largely rely on their local context and exhibit weak correlations with distant ones. As decoding progresses, the dependency among masked tokens gradually becomes denser, where masked tokens increasingly attend to a broader range of global context rather than only their local neighborhoods. These empirical findings reveal that early and middle decoding stages allow for higher degrees of parallelism due to locally sparse dependency, whereas the late stage demands more context-aware or fine-grained decoding to maintain generation quality.

\textbf{Observation III.}
\textit{Masked tokens with moderate yet stable confidence tend to rapidly evolve into distinct clusters during decoding.} 
To further shed light on the decoding trajectories of dLLMs, we visualize two representative decoding trajectories of LLaDA-8B-Instruct on GSM8K in Figure~\ref{fig:decoding-cluster-observation}, from which we derive two key observations. First, masked tokens with moderate yet stable confidence tend to form clusters that quickly evolve into high-confidence zones within a few decoding steps. As decoding progresses, these zones expand as confidence propagates to neighboring tokens that previously exhibit moderate confidence. Second, many masked tokens with moderate confidence are overlooked in the early and middle decoding stages due to the conservativeness of top-1 confidence-based decoding. According to our previous observation, masked tokens within these distant clusters exhibit negligible dependence on masked tokens outside their own clusters, which suggests that it is safe to decode them in parallel.

\subsection{Motivations}
\label{sec:motivations}

Our theoretical analysis in Section~\ref{sec:theoretical-inspirations}, together with our empirical findings in Observation~I, reveals the inherent limitations of naive parallel decoding for dLLMs. Furthermore, as shown in Observation~III, the decoding trajectories of dLLMs naturally exhibit a clustering behavior, where masked tokens with moderate yet stable confidence tend to form local clusters. Fortunately, as shown in Observation~II, these local clusters exhibit strong internal dependence but weak external dependence during the early and middle decoding stages. Taken together, these theoretical and empirical findings suggest that we can leverage a \textit{divide-and-conquer} paradigm to partition the input masked sequence into multiple local clusters and then safely perform parallel decoding across these local clusters during the early and middle decoding stages. Finally, for the remaining masked tokens---typically few in number and exhibiting strong inter-token dependence---we can tailor an effective fine-grained decoding strategy to maintain generation quality.
\section{\textit{DiCo}: Divide and Conquer}
\label{sec:method}

\subsection{Overview}
\label{sec:overview}

Building on the above theoretical inspirations and empirical observations, we propose \textit{DiCo}, a \textbf{training-free} adaptive parallel decoding approach to accelerate dLLM inference without compromising generation quality and stability. Specifically, \textit{DiCo} features a \textit{divide-and-conquer} paradigm to decode masked tokens in three phases.
\ding{182} During the \textit{Divide} phase, \textit{DiCo} first explores the input sequence and identifies a subset of masked tokens as seed tokens, which are then bidirectionally expanded to form local clusters. These local clusters are further refined and merged into larger non-overlapping clusters for the subsequent \textit{Conquer} phase. 
\ding{183} During the \textit{Conquer} phase, \textit{DiCo} adaptively decodes masked tokens across different local clusters in parallel. The \textit{divide–and-conquer} process repeatedly alternates between the \textit{Divide} and \textit{Conquer} phases until convergence.
\ding{184} During the \textit{Finalize} phase, \textit{DiCo} decodes the remaining masked tokens---which are few in number and exhibit strong inter-token dependence---using an effective tailored decoding strategy to maintain generation quality.

\subsection{Phase 1: How to Divide}
\label{sec:phase1-how-to-divide}

In the \textit{Divide} phase, \textit{DiCo} formulates dLLM decoding into an $N$-way \textit{divide–and–conquer} problem, in which the input masked sequence is progressively partitioned into multiple local clusters. This process involves several exploratory iterations, each of which consists of two alternating steps: \ul{(1)} identifying seed tokens and \ul{(2)} expanding and merging local clusters. These two steps are iteratively repeated to optimize and stabilize the partitioning process.

\textbf{Identifying seed tokens.}
The prediction confidence naturally serves as an effective proxy for identifying seed tokens, as it captures essential semantic and contextual information. However, due to the inductive biases of Transformer architectures, the confidence distribution in dLLMs tends to be highly localized~\cite{nie2025large}, making it challenging to explore seed tokens across the entire masked sequence. To address this issue, we draw inspirations from Soft-NMS~\cite{softnms2017} and leverage its principle to enforce spatial separation among different seed tokens. Specifically, we introduce a position-aware function $D(\cdot)$ to discourage \textit{DiCo} from identifying seed tokens at neighboring positions, which can be formulated as follows:
\begin{equation}
    D(i,j)=1-\phi (|i-j|)=1 - \exp \left (-\frac{|i-j|^2}{2\sigma^2} \right ),
\end{equation}
where $\phi(\cdot)$ is a Gaussian function, $i$ is the position of an already identified seed token, and $j$ is another masked position. For the Gaussian function $\phi(\cdot)$, we empirically calculate its standard deviation as $\sigma=(L-m)/3\sqrt{2}$, where $L$ is the generation length and $m$ is the prompt length.

Furthermore, to handle order-sensitive reasoning tasks that favor left-to-right generation~\cite{huang2025pc}, we introduce a trajectory-guided scheme to adaptively adjust the prediction confidence of each masked token. Taken together, the calibrated confidence for each masked token at step $t$ can be formulated as follows:
\begin{equation}
    c^{t}_{*|i}(j|i) = c^t_w(j) \cdot D(i, j)  = \left[c^t(j) \cdot W(j, t)\right] \cdot D(i, j),
\end{equation}
where $c^t(j)$ denotes the original prediction confidence. The trajectory-guided term $W(\cdot)$ is defined as follows:
\begin{equation}
W(j,t) = \exp\!\left(\frac{j \cdot \ln(\text{clip}(\alpha \cdot R(\mathbf{x}^t) + \beta,\,0,\,1))}{L-m}\right),
\end{equation}
where $\alpha$ and $\beta$ are constant coefficients, and $R(\cdot)$ denotes the unmasking ratio at step $t$. With the above in place, \textit{DiCo} iteratively selects seed tokens based on their top-1 calibrated confidence. We note that, once a seed token is selected, it is accepted only if its trajectory-guided confidence $c_w^t(j)$ exceeds the moderate threshold $\tau_1$. As such, the accepted seed tokens are both semantically rich and spatially separated, allowing \textit{DiCo} to safely unmask them within a single decoding step to advance the exploratory iterations.

\textbf{Expanding and merging local clusters.}
After a seed token is identified, \textit{DiCo} proceeds to form its local cluster through a series of bidirectional expansions. Each expansion incorporates neighboring masked tokens whose trajectory-guided confidence exceeds the threshold $\tau_1$. For each seed token, its expansion begins from the boundaries of an existing cluster if the seed token lies within it. Otherwise, the expansion starts from the seed token itself to form a new cluster. During this process, the adjacent clusters with overlapping are also merged into larger clusters. These clusters converge under two conditions: either their trajectory-guided confidence density exceeds the threshold $\tau_{2}$ for parallel decoding, or a maximum number of iterations $T_{max}$ has been reached. Upon convergence, \textit{DiCo} terminates the \textit{Divide} phase and passes the resulting local clusters to the subsequent \textit{Conquer} phase for parallel decoding.

\begin{table*}[t]
\caption{Comprehensive evaluation results on LLaDA-8B-Instruct and Dream-7B-Instruct under both non-AR and semi-AR settings. \textbf{Bold} numbers indicate the best results and numbers in brackets denote the corresponding speedup ratios compared to the Vanilla baseline.}
\centering
\resizebox{1.0\linewidth}{!}{
\begin{tabular}{llcccccccc}
\toprule[0.125em]
\multicolumn{1}{l}{\multirow{3}{*}{\textbf{Dataset}}} & \multirow{3}{*}{\textbf{Method}} & \multicolumn{4}{c}{\textbf{LLaDA-8B-Instruct~\cite{nie2025large}}} & \multicolumn{4}{c}{\textbf{Dream-7B-Instruct~\cite{ye2025dream}}} \\ 
\cmidrule(lr){3-6} \cmidrule(lr){7-10} 
\multicolumn{1}{c}{} & & \multicolumn{2}{c}{\textbf{non-AR}} & \multicolumn{2}{c}{\textbf{semi-AR}} & \multicolumn{2}{c}{\textbf{non-AR}} & \multicolumn{2}{c}{\textbf{semi-AR}} \\ 
\cmidrule(lr){3-4} \cmidrule(lr){5-6} \cmidrule(lr){7-8} \cmidrule(lr){9-10}
\multicolumn{1}{c}{} & & \multicolumn{1}{c}{\textbf{Accuracy}} & \multicolumn{1}{c}{\textbf{Throughput}} & \multicolumn{1}{c}{\textbf{Accuracy}} & \multicolumn{1}{c}{\textbf{Throughput}} & \multicolumn{1}{c}{\textbf{Accuracy}} & \multicolumn{1}{c}{\textbf{Throughput}} & \multicolumn{1}{c}{\textbf{Accuracy}} & \multicolumn{1}{c}{\textbf{Throughput}} \\ 
\midrule
\multirow{3}{*}{\begin{tabular}[c]{@{}l@{}}\textbf{GSM8K} \\ \textit{4-shot}\end{tabular}} & Vanilla (baseline) & 56.33 & 6.94 \textcolor{black}{(1.0$\times$)} & 73.16 & 6.97 \textcolor{black}{(1.0$\times$)} & 52.01 & 8.07 \textcolor{black}{(1.0$\times$)} & 35.25 & 7.91 \textcolor{black}{(1.0$\times$)} \\
 & Fast-dLLM~\cite{wu2025fast} & 57.01 & 16.18 \textcolor{black}{(2.3$\times$)} & 72.86 & 18.53 \textcolor{black}{(2.7$\times$)} & 50.80 & 17.10 \textcolor{black}{(2.1$\times$)} & 33.51 & 19.44 \textcolor{black}{(2.5$\times$)} \\
 & \cellcolor{Gray}\textit{DiCo} (ours) & \cellcolor{Gray}\textbf{75.13} & \cellcolor{Gray}\textbf{23.46 \textcolor{black}{(3.4$\times$)}} & \cellcolor{Gray}\textbf{75.21} & \cellcolor{Gray}\textbf{25.63 \textcolor{black}{(3.7$\times$)}} & \cellcolor{Gray}\textbf{63.38} & \cellcolor{Gray}\textbf{31.83 \textcolor{black}{(3.9$\times$)}} & \cellcolor{Gray}\textbf{62.93} & \cellcolor{Gray}\textbf{30.12 \textcolor{black}{(3.8$\times$)}} \\ 
\midrule
\multirow{3}{*}{\begin{tabular}[c]{@{}l@{}}\textbf{Math-500} \\ \textit{4-shot}\end{tabular}} & Vanilla (baseline) & 25.20 & 8.77 \textcolor{black}{(1.0$\times$)} & 37.00 & 8.82 \textcolor{black}{(1.0$\times$)} & 3.80 & 8.76 \textcolor{black}{(1.0$\times$)} & 21.40 & 8.97 \textcolor{black}{(1.0$\times$)} \\
 & Fast-dLLM~\cite{wu2025fast} & 25.00 & 17.65 \textcolor{black}{(2.0$\times$)} & 37.00 & 19.44 \textcolor{black}{(2.2$\times$)} & 3.80 & 10.57 \textcolor{black}{(1.2$\times$)} & 22.00 & 18.06 \textcolor{black}{(2.0$\times$)} \\
 & \cellcolor{Gray}\textit{DiCo} (ours) & \cellcolor{Gray}\textbf{36.60} & \cellcolor{Gray}\textbf{23.19 \textcolor{black}{(2.6$\times$)}} & \cellcolor{Gray}\textbf{38.00} & \cellcolor{Gray}\textbf{25.89 \textcolor{black}{(2.9$\times$)}} & \cellcolor{Gray}\textbf{26.20} & \cellcolor{Gray}\textbf{20.72 \textcolor{black}{(2.4$\times$)}} & \cellcolor{Gray}\textbf{36.40} & \cellcolor{Gray}\textbf{25.83 \textcolor{black}{(2.9$\times$)}} \\ 
\midrule
\multirow{3}{*}{\begin{tabular}[c]{@{}l@{}}\textbf{MBPP} \\ \textit{3-shot}\end{tabular}} & Vanilla (baseline) & 14.60 & 8.23 \textcolor{black}{(1.0$\times$)} & 27.80 & 8.25 \textcolor{black}{(1.0$\times$)} & 34.00 & 8.79 \textcolor{black}{(1.0$\times$)} & 29.80 & 8.95 \textcolor{black}{(1.0$\times$)} \\
 & Fast-dLLM~\cite{wu2025fast} & 14.60 & 21.30 \textcolor{black}{(2.6$\times$)} & 28.00 & 26.66 \textcolor{black}{(3.2$\times$)} & 33.80 & 30.77 \textcolor{black}{(3.5$\times$)} & 29.40 & 23.83 \textcolor{black}{(2.7$\times$)} \\
 & \cellcolor{Gray}\textit{DiCo} (ours) & \cellcolor{Gray}\textbf{31.40} & \cellcolor{Gray}\textbf{31.02 \textcolor{black}{(3.8$\times$)}} & \cellcolor{Gray}\textbf{28.60} & \cellcolor{Gray}\textbf{31.02 \textcolor{black}{(3.8$\times$)}} & \cellcolor{Gray}\textbf{46.80} & \cellcolor{Gray}\textbf{69.71 \textcolor{black}{(7.9$\times$)}} & \cellcolor{Gray}\textbf{54.80} & \cellcolor{Gray}\textbf{68.82 \textcolor{black}{(7.7$\times$)}} \\ 
\midrule
\multirow{3}{*}{\begin{tabular}[c]{@{}l@{}}\textbf{HumanEval} \\ \textit{0-shot}\end{tabular}} & Vanilla (baseline) & 12.80 & 18.36 \textcolor{black}{(1.0$\times$)} & 28.05 & 18.39 \textcolor{black}{(1.0$\times$)} & \textbf{17.07} & 15.98 \textcolor{black}{(1.0$\times$)} & 15.24 & 15.89 \textcolor{black}{(1.0$\times$)} \\
 & Fast-dLLM~\cite{wu2025fast} & 12.80 & 74.62 \textcolor{black}{(4.1$\times$)} & 26.83 & 54.01 \textcolor{black}{(2.9$\times$)} & \textbf{17.07} & 37.94 \textcolor{black}{(2.4$\times$)} & 15.24 & 38.41 \textcolor{black}{(2.4$\times$)} \\
 & \cellcolor{Gray}\textit{DiCo} (ours) & \cellcolor{Gray}\textbf{29.27} & \cellcolor{Gray}\textbf{88.55 \textcolor{black}{(4.8$\times$)}} & \cellcolor{Gray}\textbf{37.20} & \cellcolor{Gray}\textbf{62.94 \textcolor{black}{(3.4$\times$)}} & \cellcolor{Gray}15.24 & \cellcolor{Gray}\textbf{48.84 \textcolor{black}{(3.1$\times$)}} & \cellcolor{Gray}\textbf{23.17} & \cellcolor{Gray}\textbf{90.64 \textcolor{black}{(5.7$\times$)}} \\ 
\midrule
\multirow{3}{*}{\begin{tabular}[c]{@{}l@{}}\textbf{Average}\end{tabular}} & Vanilla & 27.23 & 10.58 \textcolor{black}{(1.0$\times$)} & 41.50 & 10.61 \textcolor{black}{(1.0$\times$)} & 26.72 & 10.40 \textcolor{black}{(1.0$\times$)} & 25.42 & 10.43 \textcolor{black}{(1.0$\times$)} \\
 & Fast-dLLM & 27.35 & 32.44 \textcolor{black}{(3.1$\times$)} & 41.17 & 29.66 \textcolor{black}{(2.8$\times$)} & 26.37 & 24.10 \textcolor{black}{(2.3$\times$)} & 25.04 & 24.94 \textcolor{black}{(2.4$\times$)} \\
 & \cellcolor{Gray}\textit{DiCo} (ours) & \cellcolor{Gray}\textbf{43.10} & \cellcolor{Gray}\textbf{41.56 \textcolor{black}{(3.9$\times$)}} & \cellcolor{Gray}\textbf{44.75} & \cellcolor{Gray}\textbf{36.37 \textcolor{black}{(3.4$\times$)}} & \cellcolor{Gray}\textbf{37.91} & \cellcolor{Gray}\textbf{42.80 \textcolor{black}{(4.1$\times$)}} & \cellcolor{Gray}\textbf{44.33} & \cellcolor{Gray}\textbf{53.85 \textcolor{black}{(5.2$\times$)}} \\ 
\bottomrule[0.125em]
\end{tabular}
}
\label{tab:main-results}
\end{table*}

\subsection{Phase 2: How to Conquer}
\label{sec:phase2-how-to-conquer}

In the \textit{Conquer} phase, \textit{DiCo} further performs adaptive parallel decoding across local clusters as constructed in the previous \textit{Divide} phase. We begin with a theorem that offers theoretical insights into the design of the above adaptive parallel decoding strategy.

\begin{theorem}
\label{theorem:1}
For a given dLLM, let $\mathbf{x} = (\mathbf{x}_0, \ldots, \mathbf{x}_{n-1})$ denote its decoding trajectory of length $n$, where $\mathbf{x}_{i}$ represents the token predicted at position $i$ during decoding. For each position $i \in [0,\ldots,n-1]$, its prediction probability under greedy decoding satisfies:
\begin{equation}
P(\mathbf{x}_i) = 1-\epsilon_i, \quad \mathrm{where} \; 0\le \epsilon_i < 1.
\end{equation}
We define $\epsilon = \max_{0\le i\le n-1} \epsilon_i$. Based on the above, we can derive that the following two properties hold: \ul{(1)} $\lim_{\epsilon \to 0} P_{marginal}(\mathbf{x}) = \lim_{\epsilon \to 0} P_{joint}(\mathbf{x})$ and \ul{(2)} the total variation distance $\Delta_{TVD}(P_{marginal}(\mathbf{x}), P_{joint}(\mathbf{x}))$ between the marginal and joint PMFs across all positions is upper bounded by $n\epsilon$. The detailed proof of Theorem~\ref{theorem:1} is provided in the Appendix.
\end{theorem}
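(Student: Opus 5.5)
The plan is to reduce both properties to bounding two scalars that lie in $[1-n\epsilon, 1]$ (for $n\epsilon\le 1$). We interpret the hypothesis $P(\mathbf{x}_i)=1-\epsilon_i$ as applying to the probability the model assigns to the greedily decoded token $\mathbf{x}_i$ at position $i$. This holds both marginally, as used in $P_{marginal}$ from Eq.~(\ref{eq:marginal-PMF}), and conditionally on the previously decoded tokens, as used in the factors of $P_{joint}$ from Eq.~(\ref{eq:joint-PMF}). This is the reading under which greedy decoding in the trajectory reports confidence $1-\epsilon_i$ at each step. We then have
\begin{equation*}
P_{marginal}(\mathbf{x}) = \prod\nolimits_{i=0}^{n-1}(1-\epsilon_i), \qquad P_{joint}(\mathbf{x}) = \prod\nolimits_{i=0}^{n-1}(1-\epsilon_i'),
\end{equation*}
where $\epsilon_i'\le\epsilon$ are the conditional deficits. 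If one prefers not to assume a conditional bound, we can instead use the chain rule together with a union bound: $P_{joint}(\mathbf{x}) = P(\mathbf{x}_0,\ldots,\mathbf{x}_{n-1}) \ge 1-\sum_i \Pr[\text{position } i \ne \mathbf{x}_i] = 1-\sum_i\epsilon_i \ge 1-n\epsilon$. This lower bound needs only the marginal hypothesis, so I would present it as the main route.

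For property (1), the Weierstrass product inequality gives $1 \ge \prod_i(1-\epsilon_i)\ge 1-\sum_i\epsilon_i\ge 1-n\epsilon$, and the joint probability lies in the same interval by the union bound. With $n$ fixed and $\epsilon\to 0$, both quantities are squeezed to $1$, so their limits coincide.

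For property (2), I expect the main obstacle to be interpretive: the statement applies a ``total variation distance'' to two numbers evaluated at a single trajectory. I would handle this by working with the full distributions $Q=\bigotimes_i P_i$ (the product of marginals) and $P$ (the joint), both over $\mathcal{V}^n$, and using $\Delta_{TVD}(Q,P)=\sum_{\mathbf{y}}(P(\mathbf{y})-Q(\mathbf{y}))^+$. Splitting the sum into $\mathbf{y}=\mathbf{x}$ and $\mathbf{y}\neq\mathbf{x}$, the second part is at most $P(\mathbf{y}\ne\mathbf{x}) = 1-P_{joint}(\mathbf{x})\le n\epsilon$. The first part is $(P_{joint}(\mathbf{x})-P_{marginal}(\mathbf{x}))^+\le 1-(1-n\epsilon)$.

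That naive split gives a bound of $2n\epsilon$, so the constant needs sharpening. For this I would use the identity $\Delta_{TVD}(Q,P)=\sum_{\mathbf{y}}(Q(\mathbf{y})-P(\mathbf{y}))^+$, evaluated symmetrically. When $Q(\mathbf{x})\le P(\mathbf{x})$, the positive part of $Q-P$ lives entirely on $\mathbf{y}\ne\mathbf{x}$ and is bounded by $Q(\mathbf{y}\ne\mathbf{x})=1-\prod_i(1-\epsilon_i)\le n\epsilon$. In the other case, the positive part of $P-Q$ lives on $\mathbf{y}\neq\mathbf{x}$ and is bounded by $1-P_{joint}(\mathbf{x})\le n\epsilon$. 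Either case gives $\Delta_{TVD}\le n\epsilon$. If the paper instead intends the pointwise quantity $|P_{marginal}(\mathbf{x})-P_{joint}(\mathbf{x})|$, the bound $n\epsilon$ follows immediately, since both values lie in $[1-n\epsilon,1]$.
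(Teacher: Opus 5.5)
Your proposal is correct and follows essentially the paper's argument: the union bound gives $P_{joint}(\mathbf{x})\ge 1-\sum_i\epsilon_i$, and the Bernoulli/Weierstrass inequality gives the same lower bound for $P_{marginal}(\mathbf{x})$. Your case split on the sign of $Q(\mathbf{x})-P(\mathbf{x})$ is exactly the paper's step $\tfrac12\bigl(|a-b|+a+b\bigr)=\max(a,b)$ with $a=1-Q(\mathbf{x})$ and $b=1-P(\mathbf{x})$, giving $\Delta_{TVD}\le\max(a,b)\le\sum_i\epsilon_i\le n\epsilon$.

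The only difference is in the scope of Property (1). The paper also treats non-greedy trajectories $\mathbf{y}\neq\mathbf{x}$: at a deviating position $k$ both PMFs are at most $\epsilon_k$, so both tend to $0$. You skip this case, but it follows at once from your TVD bound, since $|Q(\mathbf{y})-P(\mathbf{y})|\le\Delta_{TVD}(Q,P)\le n\epsilon$ for every $\mathbf{y}$.
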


{\faLightbulbO}
\textbf{Inspirations.}
Theorem~\ref{theorem:1}.1 indicates that the product of the marginal PMFs converges to the joint PMF when the dLLM’s predictions are sufficiently confident across all masked positions. This justifies performing parallel decoding across multiple masked positions when their confidence scores are high enough. Theorem~\ref{theorem:1}.2 further establishes a quantitative upper bound on the discrepancy between the marginal and joint PMFs, which provides a theoretical foundation for confidence-based parallel decoding.

\textbf{Adaptive parallel decoding.}
Inspired by Theorem~\ref{theorem:1} and \cite{wu2025fast}, we introduce an adaptive parallel decoding approach for the \textit{Conquer} phase to leverage the high-confidence information within local clusters. Instead of relying on a naive top-$k$ or fixed confidence-based scheme, this approach adaptively determines which masked positions to unmask according to their confidence scores. Formally, the set of tokens to unmask at step $t$ is defined as follows:
\begin{equation}
    S_t = \{i\;|\; (|S_t|+1)(1-c^t(i)) < 1 \},
\end{equation}
where $c^t(i)$ denotes the prediction confidence of the masked position $i$ at step $t$. In this approach, the number of tokens unmasked in parallel within each local cluster at a given step is lower-bounded by the minimum confidence among its tokens. This property aligns well with the local clusters formed in the \textit{Divide} phase, where the confidence of all tokens exceeds the parallel decoding threshold $\tau_2$ and continues to rapidly increase as decoding progresses, as discussed in Observation~III. Furthermore, at each decoding step, \textit{DiCo} also re-evaluates the confidence of masked tokens within each local cluster and its surrounding context. After that, \textit{DiCo} adaptively adjusts the boundary of each local cluster by including new masked tokens with confidence $\ge \tau_2$ and excluding masked tokens that have been unmasked or no longer satisfy this threshold $\tau_2$. This adaptive mechanism ensures that each local cluster continuously tracks high-confidence masked tokens, thus sustaining efficient parallel decoding throughout the \textit{Conquer} phase.

\textbf{Divide and conquer until convergence.}
During the \textit{Conquer} phase, \textit{DiCo} continuously monitors the confidence density within each local cluster. Specifically, \textit{DiCo} terminates the current \textit{Conquer} phase once the confidence density falls below the parallel decoding threshold $\tau_2$, which indicates that the remaining masked tokens are no longer sufficiently confident to support reliable parallel decoding. Upon termination, \textit{DiCo} determines its next phase based on the current unmasking ratio, denoted as $R(\mathbf{x}^{t})$. If $R(\mathbf{x}^{t}) < R_{gate}$---a gating criterion suggesting that dependencies among masked tokens have become globalized---\textit{DiCo} re-enters the \textit{Divide} phase to identify new local clusters under the updated context for further adaptive parallel decoding. Otherwise, \textit{DiCo} directly transitions to the \textit{Finalize} phase to generate the final output.

\subsection{Phase 3: How to Finalize}
\label{sec:phase3-how-to-finalize}

After most masked tokens have been decoded through the \textit{divide-and-conquer} process, the context becomes densely populated, leaving a small number of masked tokens scattered across the sequence. In light of this, a fine-grained decoding strategy that leverages the global context is required to finalize the decoding process and ensure coherent generation. To this end, in the \textit{Finalize} phase, we employ an effective fine-grained compound decoding strategy, which considers both the logit margin and top-1 confidence of each masked token to optimize generation quality and stability.

\textbf{Fine-grained compound decoding.}
For each masked position $i$ at the decoding step $t$, its logit margin $\Delta L_i^t$ is mathematically defined as the difference between its highest and second-highest logits as follows:
\begin{equation}
    \Delta L_i^t = \max(L_i^t) - \text{second\_max}(L_i^t),
\end{equation}
where $L_i^t$ denotes the output logits at masked position $i$ and step $t$. In practice, after each forward pass, \textit{DiCo} first selects a subset of masked tokens from the remaining masked tokens that satisfy $\Delta L_i^t > \tau_{3}$, where $\tau_{3}$ is the logit margin threshold. \textit{DiCo} then unmasks the selected masked tokens in parallel. If no masked tokens meet this criterion (i.e., $\Delta L_i^t > \tau_{3}$), \textit{DiCo} defaults to the top-1 confidence-based decoding. We note that compound decoding can balance between decisiveness and caution in the \textit{Finalize} phase. On the one hand, the logit margin reflects the dLLM’s certainty gap between the most and second-most likely tokens, which can effectively avoid premature or ambiguous token selections. On the other hand, the top-1 confidence-based decoding ensures steady decoding progress even when the logit margin is relatively small. This complementary mechanism enables \textit{DiCo} to finalize the decoding process under the global context while maintaining stability and consistency.
\section{Experiments}
\label{sec:experiments}

\subsection{Experimental Settings}
\label{sec:experimental-settings}

\begin{figure}[t]
  \centering
  \includegraphics[width=1.0\linewidth]{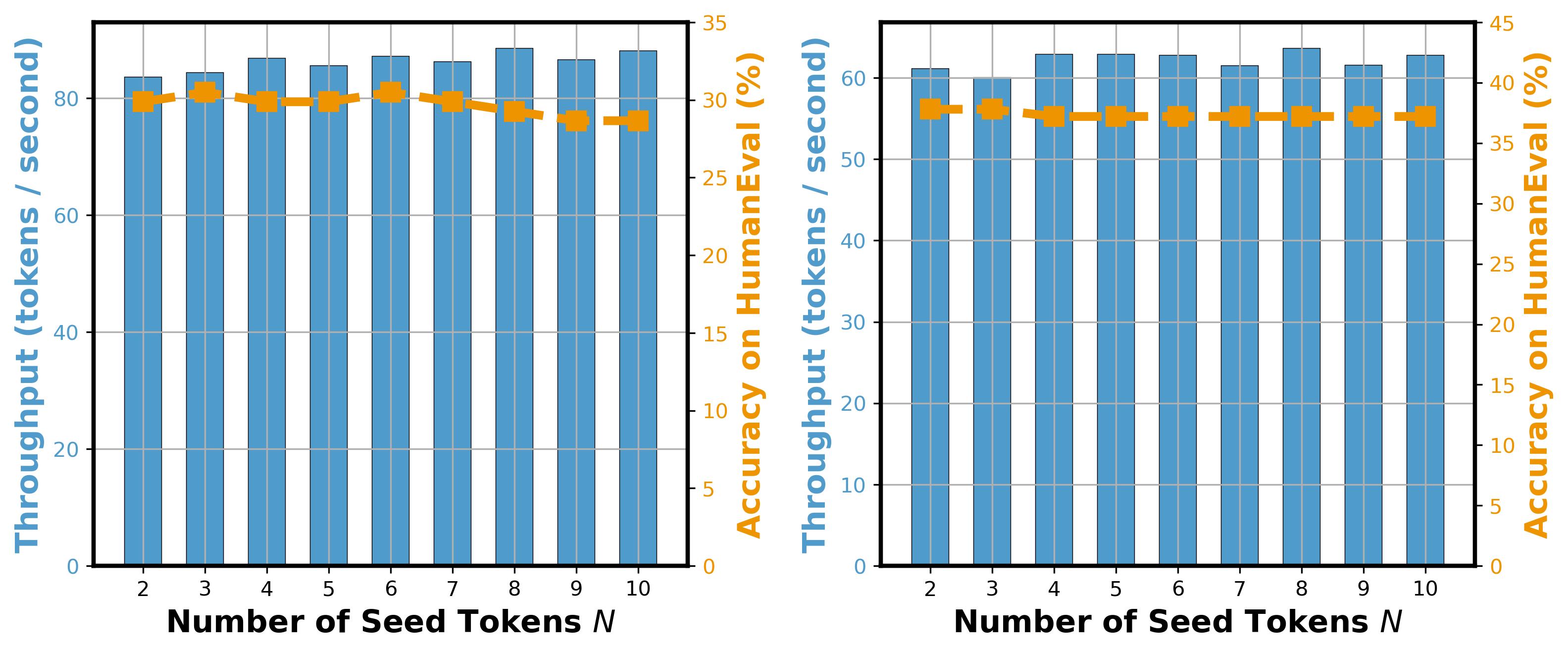} 
  \vspace{-10pt}
  \caption{Ablation results of the number of seek tokens $N$ on HumanEval under the non-AR (\textit{left}) and semi-AR (\textit{right}) settings.}
  \label{fig:abl1} 
\end{figure}

\textbf{Models, datasets, metrics, and hardware.}
Following recent conventions~\cite{wu2025fast}, we evaluate the proposed \textit{DiCo} on two representative dLLMs (i.e., LLaDA-8B-Instruct~\cite{nie2025large} and Dream-7B-Instruct~\cite{ye2025dream}) across four widely used datasets, including two math tasks (i.e., GSM8K~\cite{gsm8k2021} and Math-500~\cite{math5002023}) and two code generation tasks (i.e., HumanEval~\cite{humaneval2021} and MBPP~\cite{mbpp2021}). The performance is reported in terms of \ul{(1)} task accuracy, which is evaluated using the \texttt{lm-eval-harness} framework~\cite{eval-harness}, and \ul{(2)} inference throughput, which is measured by the average number of tokens generated per second (TPS). For fair comparisons, all experiments in this work are conducted on NVIDIA RTX 4090 GPUs (24GB).

\textbf{Baselines.}
For each dLLM, we consider two baselines: \ul{(1)} Vanilla, which performs top-1 confidence-based decoding, and \ul{(2)} Fast-dLLM~\cite{wu2025fast}, which performs confidence-based parallel decoding. For Fast-dLLM, the confidence threshold is set to 0.95. For fair and comprehensive comparisons, we evaluate these two baselines and \textit{DiCo} under both non-AR and semi-AR decoding settings. In the non-AR decoding setting, we set the generation length to 256. In the semi-AR decoding setting, we set the generation length to 256 and the block size to 128.

\textbf{Implementation details.}
For the hyperparameters in \textit{DiCo}, we empirically set $\alpha=0.5$, $\beta=0.05$, $R_{gate}=0.8$, $\tau_1=0.3$, $\tau_2=0.6$, and $\tau_3=3$. Furthermore, in the \textit{Divide} phase, the number of seed tokens $N$ is set to 8 in the non-AR setting and 4 in the semi-AR setting.

\begin{figure}[t]
  \centering
  \includegraphics[width=1.0\linewidth]{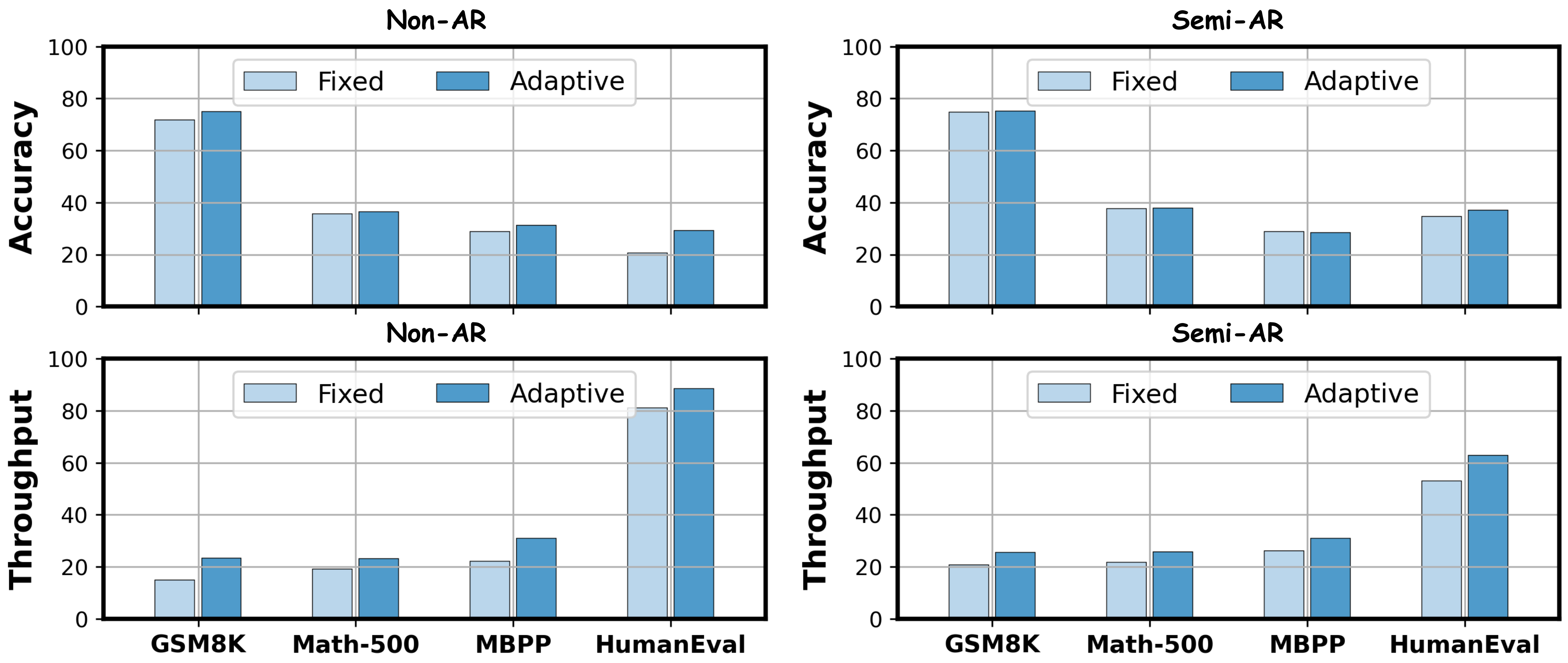} 
  \vspace{-10pt}
  \caption{Fixed confidence-based parallel decoding vs. adaptive parallel decoding under non-AR (\textit{left}) / semi-AR (\textit{right}) settings.}
  \label{fig:abl:fixed_vs_factor} 
\end{figure}

\subsection{Main Results}
\label{sec:main-results}

The main results are summarized in Table~\ref{tab:main-results}, which clearly shows the superiority of \textit{DiCo} on both LLaDA-8B-Instruct and Dream-7B-Instruct across all four datasets. We observe that \textit{DiCo} achieves substantially better accuracy-efficiency trade-offs than Vanilla and Fast-dLLM~\cite{wu2025fast} under both non-AR and semi-AR settings. Taking LLaDA-8B-Instruct as an example, compared to Vanilla, \textit{DiCo} achieves up to 4.8$\times$ inference speedups and $+$18.8\% higher accuracy under the non-AR setting, as well as up to 3.8$\times$ inference speedups and $+$9.15\% higher accuracy under the semi-AR setting. Compared to Fast-dLLM~\cite{wu2025fast}, \textit{DiCo} also achieves much better performance in terms of both accuracy and inference throughput, especially under the non-AR setting. These experimental results demonstrate that \textit{DiCo}, as an adaptive parallel decoding approach, not only delivers significant improvement in inference efficiency, but also unleashes the capability of non-AR decoding for dLLMs---an ability often compromised under the semi-AR setting in prior works~\cite{nie2025large, wu2025fast, wu2025fast-dllmv2}.

\subsection{Ablations and Discussions}
\label{sec:ablations-and-discussions}

In this section, we conduct a series of ablation experiments to analyze \textit{DiCo}, where we leverage LLaDA-8B-Instruct~\cite{nie2025large} as the default dLLM.

\textbf{Ablation on the number of seed tokens $N$.}
We first explore how the hyperparameter $N$---which specifies the number of seed tokens in the \textit{Divide} phase---affects the performance of \textit{DiCo}. The experiment is conducted on HumanEval, an order-sensitive code generation task that is most likely to be affected by the choice of $N$~\cite{humaneval2021}. As shown in Figure~\ref{fig:abl1}, when $N$ increases from 2 to 10, both accuracy and inference throughput remain highly stable under both non-AR and semi-AR settings, indicating that \textit{DiCo} is insensitive to $N$. This reveals that \textit{DiCo} can adaptively determine an appropriate degree of divide-and-conquer through its seed identification mechanism, maintaining stable generation quality in practice.

\textbf{Fixed confidence-based parallel decoding vs. adaptive parallel decoding.}
As discussed in Section~\ref{sec:phase2-how-to-conquer}, we introduce an adaptive parallel decoding strategy in the \textit{Conquer} phase to decode masked tokens across different local clusters in parallel. Below we further compare the adaptive parallel decoding with fixed confidence-based parallel decoding as proposed in Fast-dLLM~\cite{wu2025fast}. As shown in Figure~\ref{fig:abl:fixed_vs_factor}, adaptive parallel decoding consistently achieves better inference throughput across all four datasets, while maintaining comparable accuracy to fixed confidence-based parallel decoding. This clearly demonstrates the efficacy of our proposed adaptive parallel decoding.

\begin{figure}[t]
  \centering
  \includegraphics[width=1.0\linewidth]{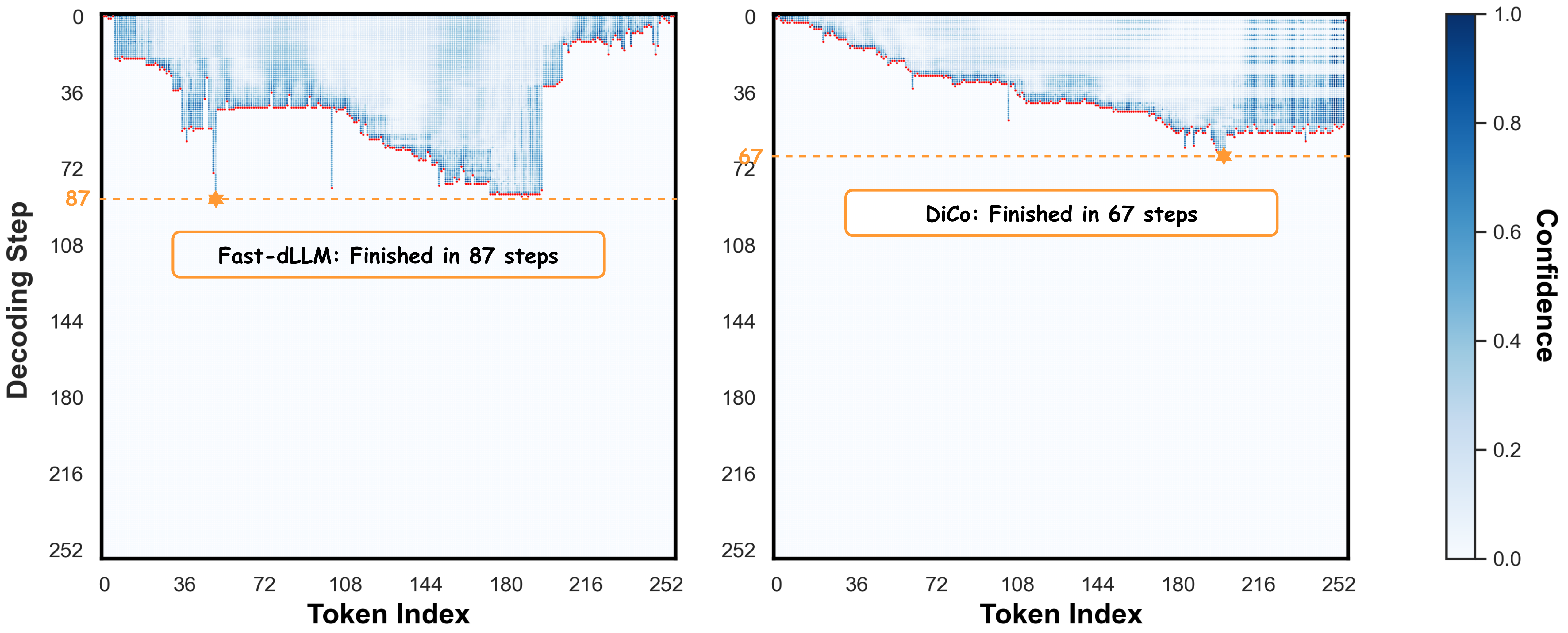} 
  \vspace{-10pt}
  \caption{Visualization of the decoding trajectory of Fast-dLLM (\textit{left}, finished in 87 steps) and \textit{DiCo} (\textit{right}, finished in 67 steps).}
  \label{fig:abl:fastdllm-vs-dico-tg} 
\end{figure}

\textbf{Ablation on trajectory guidance.}
As discussed in Section~\ref{sec:phase1-how-to-divide}, during the \textit{Divide} phase, we introduce trajectory guidance to enhance seed token selection. To further investigate its efficacy, we take GSM8K as an example for analysis. The experimental results are summarized in Table~\ref{tab:abl:tr_gd}. Notably, we observe that without trajectory guidance, \textit{DiCo} achieves higher inference throughput than Vanilla but much lower accuracy on GSM8K. When trajectory guidance is applied, \textit{DiCo} not only surpasses Vanilla by a large margin in accuracy but also retains a notable inference speedup. These results indicate that trajectory guidance effectively aligns seed token selection, thus enhancing generation quality at a minor cost of decoding efficiency.

\begin{table}[h]
\centering
\caption{Ablation results of trajectory guidance. In this table, \texttt{TG} is the abbreviation of trajectory guidance.}
\resizebox{1.0\linewidth}{!}{%
\begin{tabular}{l c c c c}
\toprule[0.125em]
\multirow{2}{*}{\textbf{Method}} & \multicolumn{2}{c}{\textbf{non-AR}} & \multicolumn{2}{c}{\textbf{semi-AR}} \\
\cmidrule(lr){2-3} \cmidrule(lr){4-5}
 & \textbf{Accuracy} & \textbf{Throughput} & \textbf{Accuracy} & \textbf{Throughput} \\ 
\midrule 
Vanilla  & 56.33          & 6.94 (1.0$\times$) & 73.16          & 6.97 (1.0$\times$) \\
\textit{DiCo} w/o \texttt{TG}  & 49.28          & \textbf{34.00 (4.9$\times$)} & 68.92          & \textbf{29.60 (4.2$\times$)} \\
\cellcolor{Gray}\textit{DiCo} w/ \texttt{TG} & \cellcolor{Gray}\textbf{75.13} & \cellcolor{Gray}23.46 (3.4$\times$)          & \cellcolor{Gray}\textbf{75.21} & \cellcolor{Gray}25.63 (3.7$\times$)          \\
\bottomrule[0.125em]
\end{tabular}%
}
\label{tab:abl:tr_gd}
\end{table}

\begin{table}[t]
\centering
\caption{Ablation results of fine-grained compound decoding. In this table, \texttt{LM} is the abbreviation of logit margin.}
\resizebox{1.0\linewidth}{!}{%
\begin{tabular}{l c c c c}
\toprule[0.125em]
\multirow{2}{*}{\textbf{Method}} & \multicolumn{2}{c}{\textbf{non-AR}} & \multicolumn{2}{c}{\textbf{semi-AR}} \\
\cmidrule(lr){2-3} \cmidrule(lr){4-5}
 & \textbf{Accuracy} & \textbf{Throughput} & \textbf{Accuracy} & \textbf{Throughput} \\ 
\midrule 
Vanilla  & 56.33          & 6.94 (1.0$\times$) & 73.16          & 6.97 (1.0$\times$) \\
\textit{DiCo} w/o \texttt{LM} & 74.45 & 19.45 (2.8$\times$) & 75.13 & 22.53(3.2$\times$) \\
\cellcolor{Gray}\textit{DiCo} w/ \texttt{LM} & \cellcolor{Gray}\textbf{75.13} & \cellcolor{Gray}\textbf{23.46 (3.4$\times$)} & \cellcolor{Gray}\textbf{75.21} & \cellcolor{Gray}\textbf{25.63(3.7$\times$)} \\
\bottomrule[0.125em]
\end{tabular}%
}
\label{tab:ablation-logit-margin}
\end{table}

\textbf{Visualization of decoding trajectories.}
To gain further insights, we visualize a representative decoding trajectory of \textit{DiCo} and compare it with Fast-dLLM~\cite{wu2025fast} on the same input sampled from GSM8K. As shown in Figure~\ref{fig:abl:fastdllm-vs-dico-tg}, \textit{DiCo} completes decoding in only 67 steps, whereas Fast-dLLM requires 87 steps. In contrast, Vanilla requires 256 steps (see Figure~\ref{fig:decoding-cluster-observation}). This indicates that \textit{DiCo} can reduce the total number of decoding steps, thus offering better inference speedups. In addition, we observe that \textit{DiCo} enforces the decoding trajectory to approximately follow a left-to-right progression. As noted in~\cite{huang2025pc}, such left-to-right behavior is particularly beneficial for order-sensitive reasoning tasks to achieve better accuracy.

\textbf{Ablation on fine-grained compound decoding.}
As discussed in Section~\ref{sec:phase3-how-to-finalize}, we tailor a fine-grained compound decoding strategy in the \textit{Finalize} phase, which jointly considers the top-1 confidence and logit margin to enhance generation quality and stability. As shown in Table~\ref{tab:ablation-logit-margin}, compared to the default top-1 confidence-based decoding, our \textit{logit margin + top-1 confidence}-based fine-grained compound decoding achieves significantly higher accuracy on GSM8K under both non-AR and semi-AR settings, while maintaining higher inference throughput. This indicates that our fine-grained compound decoding effectively improves the robustness of token generation in the \textit{Finalize} phase by leveraging both the top-1 confidence and logit margin, leading to more reliable final output.
\section{Conclusion}
\label{sec:conclusion}

In this work, we introduce a training-free adaptive parallel decoding approach, dubbed \textit{DiCo}, to bridge the gap between the theoretical parallelism and practical performance of dLLMs. Specifically, \textit{DiCo} features a \textit{divide-and-conquer} paradigm, which consists of three iterative phases: the \textit{Divide} phase for identifying seed tokens and forming local clusters, the \textit{Conquer} phase for performing adaptive parallel decoding across different local clusters, and the \textit{Finalize} phase for decoding the remaining masked tokens and generating the final output. Extensive experiments demonstrate that \textit{DiCo} can achieve substantial inference speedups without compromising generation quality.

\bibliography{reference}

@article{dubey2024llama,
  title={{The Llama 3 Herd of Models}},
  author={Dubey, Abhimanyu and Jauhri, Abhinav and Pandey, Abhinav and Kadian, Abhishek and Al-Dahle, Ahmad and Letman, Aiesha and Mathur, Akhil and Schelten, Alan and Yang, Amy and Fan, Angela and others},
  journal={arXiv e-prints},
  pages={arXiv--2407},
  year={2024}
}

@article{guo2025deepseek,
  title={{DeepSeek-R1 Incentivizes Reasoning in LLMs through Reinforcement Learning}},
  author={Guo, Daya and Yang, Dejian and Zhang, Haowei and Song, Junxiao and Wang, Peiyi and Zhu, Qihao and Xu, Runxin and Zhang, Ruoyu and Ma, Shirong and Bi, Xiao and others},
  journal={Nature},
  volume={645},
  number={8081},
  pages={633--638},
  year={2025},
  publisher={Nature Publishing Group UK London}
}

@article{achiam2023gpt,
  title={{GPT-4 Technical Report}},
  author={Achiam, Josh and Adler, Steven and Agarwal, Sandhini and Ahmad, Lama and Akkaya, Ilge and Aleman, Florencia Leoni and Almeida, Diogo and Altenschmidt, Janko and Altman, Sam and Anadkat, Shyamal and others},
  journal={arXiv preprint arXiv:2303.08774},
  year={2023}
}

@article{li2024large,
  title={{Large Language Model Inference Acceleration: A Comprehensive Hardware Perspective}},
  author={Li, Jinhao and Xu, Jiaming and Huang, Shan and Chen, Yonghua and Li, Wen and Liu, Jun and Lian, Yaoxiu and Pan, Jiayi and Ding, Li and Zhou, Hao and others},
  journal={arXiv preprint arXiv:2410.04466},
  year={2024}
}

@article{li2024survey,
  title={{A Survey on Large Language Model Acceleration based on KV Cache Management}},
  author={Li, Haoyang and Li, Yiming and Tian, Anxin and Tang, Tianhao and Xu, Zhanchao and Chen, Xuejia and Hu, Nicole and Dong, Wei and Li, Qing and Chen, Lei},
  journal={arXiv preprint arXiv:2412.19442},
  year={2024}
}

@article{sahoo2024simple,
  title={{Simple and Effective Masked Diffusion Language Models}},
  author={Sahoo, Subham and Arriola, Marianne and Schiff, Yair and Gokaslan, Aaron and Marroquin, Edgar and Chiu, Justin and Rush, Alexander and Kuleshov, Volodymyr},
  journal={Advances in Neural Information Processing Systems},
  volume={37},
  pages={130136--130184},
  year={2024}
}

@article{liu2025dllm-cache,
  title={dLLM-Cache: Accelerating Diffusion Large Language Models with Adaptive Caching},
  author={Liu, Zhiyuan and Yang, Yicun and Zhang, Yaojie and Chen, Junjie and Zou, Chang and Wei, Qingyuan and Wang, Shaobo and Zhang, Linfeng},
  journal={arXiv preprint arXiv:2506.06295},
  year={2025}
}

@article{ma2025dkv-cache,
  title={dKV-Cache: The Cache for Diffusion Language Models},
  author={Ma, Xinyin and Yu, Runpeng and Fang, Gongfan and Wang, Xinchao},
  journal={arXiv preprint arXiv:2505.15781},
  year={2025}
}

@article{jiang2025d,
  title={d$^{2}$Cache: Accelerating Diffusion-Based LLMs via Dual Adaptive Caching},
  author={Jiang, Yuchu and Cai, Yue and Luo, Xiangzhong and Fu, Jiale and Wang, Jiarui and Liu, Chonghan and Yang, Xu},
  journal={arXiv preprint arXiv:2509.23094},
  year={2025}
}

@article{nie2025large,
  title={{Large Language Diffusion Models}},
  author={Nie, Shen and Zhu, Fengqi and You, Zebin and Zhang, Xiaolu and Ou, Jingyang and Hu, Jun and Zhou, Jun and Lin, Yankai and Wen, Ji-Rong and Li, Chongxuan},
  journal={arXiv preprint arXiv:2502.09992},
  year={2025}
}

@article{ye2025dream,
  title={{Dream 7B: Diffusion Large Language Models}},
  author={Ye, Jiacheng and Xie, Zhihui and Zheng, Lin and Gao, Jiahui and Wu, Zirui and Jiang, Xin and Li, Zhenguo and Kong, Lingpeng},
  journal={arXiv preprint arXiv:2508.15487},
  year={2025}
}

@article{li2025survey-dllm,
  title={{A Survey on Diffusion Language Models}},
  author={Li, Tianyi and Chen, Mingda and Guo, Bowei and Shen, Zhiqiang},
  journal={arXiv preprint arXiv:2508.10875},
  year={2025}
}

@article{israel2025accelerating,
  title={{Accelerating Diffusion LLMs via Adaptive Parallel Decoding}},
  author={Israel, Daniel and Broeck, Guy Van den and Grover, Aditya},
  journal={arXiv preprint arXiv:2506.00413},
  year={2025}
}

@article{bao2025learning,
  title={{Learning to Parallel: Accelerating Diffusion Large Language Models via Adaptive Parallel Decoding}},
  author={Bao, Wenrui and Chen, Zhiben and Xu, Dan and Shang, Yuzhang},
  journal={arXiv preprint arXiv:2509.25188},
  year={2025}
}

@article{chen2025dparallel,
  title={{dParallel: Learnable Parallel Decoding for dLLMs}},
  author={Chen, Zigeng and Fang, Gongfan and Ma, Xinyin and Yu, Ruonan and Wang, Xinchao},
  journal={arXiv preprint arXiv:2509.26488},
  year={2025}
}

@article{wu2025fast,
  title={{Fast-dLLM: Training-free Acceleration of Diffusion LLM by Enabling KV Cache and Parallel Decoding}},
  author={Wu, Chengyue and Zhang, Hao and Xue, Shuchen and Liu, Zhijian and Diao, Shizhe and Zhu, Ligeng and Luo, Ping and Han, Song and Xie, Enze},
  journal={arXiv preprint arXiv:2505.22618},
  year={2025}
}

@article{wu2025fast-dllmv2,
  title={{Fast-dLLM v2: Efficient Block-Diffusion LLM}},
  author={Wu, Chengyue and Zhang, Hao and Xue, Shuchen and Diao, Shizhe and Fu, Yonggan and Liu, Zhijian and Molchanov, Pavlo and Luo, Ping and Han, Song and Xie, Enze},
  journal={arXiv preprint arXiv:2509.26328},
  year={2025}
}

@article{wu2025free,
  title={{Free Draft-and-Verification: Toward Lossless Parallel Decoding for Diffusion Large Language Models}},
  author={Wu, Shutong and Zhang, Jiawei},
  journal={arXiv preprint arXiv:2510.00294},
  year={2025}
}

@article{huang2025pc,
  title={{PC-Sampler: Position-Aware Calibration of Decoding Bias in Masked Diffusion Models}},
  author={Huang, Pengcheng and Liu, Shuhao and Liu, Zhenghao and Yan, Yukun and Wang, Shuo and Chen, Zulong and Xiao, Tong},
  journal={arXiv preprint arXiv:2508.13021},
  year={2025}
}

@article{croitoru2023diffusion-survey,
  title={{Diffusion Models in Vision: A Survey}},
  author={Croitoru, Florinel-Alin and Hondru, Vlad and Ionescu, Radu Tudor and Shah, Mubarak},
  journal={IEEE Transactions on Pattern Analysis and Machine Intelligence},
  volume={45},
  number={9},
  pages={10850--10869},
  year={2023},
  publisher={Ieee}
}

@article{nie2024scaling,
  title={{Scaling up Masked Diffusion Models on Text}},
  author={Nie, Shen and Zhu, Fengqi and Du, Chao and Pang, Tianyu and Liu, Qian and Zeng, Guangtao and Lin, Min and Li, Chongxuan},
  journal={arXiv preprint arXiv:2410.18514},
  year={2024}
}

@article{shi2024simplified,
  title={{Simplified and Generalized Masked Diffusion for Discrete Data}},
  author={Shi, Jiaxin and Han, Kehang and Wang, Zhe and Doucet, Arnaud and Titsias, Michalis},
  journal={Advances in neural information processing systems},
  volume={37},
  pages={103131--103167},
  year={2024}
}

@article{zhu2025llada-v1.5,
  title={{LLaDA 1.5: Variance-Reduced Preference Optimization for Large Language Diffusion Models}},
  author={Zhu, Fengqi and Wang, Rongzhen and Nie, Shen and Zhang, Xiaolu and Wu, Chunwei and Hu, Jun and Zhou, Jun and Chen, Jianfei and Lin, Yankai and Wen, Ji-Rong and others},
  journal={arXiv preprint arXiv:2505.19223},
  year={2025}
}

@article{zhu2025llada-moe,
  title={{LLaDA-MoE: A Sparse MoE Diffusion Language Model}},
  author={Zhu, Fengqi and You, Zebin and Xing, Yipeng and Huang, Zenan and Liu, Lin and Zhuang, Yihong and Lu, Guoshan and Wang, Kangyu and Wang, Xudong and Wei, Lanning and others},
  journal={arXiv preprint arXiv:2509.24389},
  year={2025}
}

@article{gao2025self,
  title={{Self Speculative Decoding for Diffusion Large Language Models}},
  author={Gao, Yifeng and Ji, Ziang and Wang, Yuxuan and Qi, Biqing and Xu, Hanlin and Zhang, Linfeng},
  journal={arXiv preprint arXiv:2510.04147},
  year={2025}
}

@inproceedings{leviathan2023fast,
  title={{Fast Inference from Transformers via Speculative Decoding}},
  author={Leviathan, Yaniv and Kalman, Matan and Matias, Yossi},
  booktitle={International Conference on Machine Learning},
  pages={19274--19286},
  year={2023},
  organization={PMLR}
}

@inproceedings{softnms2017,
  title={{Soft-NMS -- Improving Object Detection With One Line of Code}},
  author={Bodla, Navaneeth and Singh, Bharat and Chellappa, Rama and Davis, Larry S},
  booktitle={Proceedings of the IEEE International Conference on Computer Vision},
  pages={5561--5569},
  year={2017}
}

@book{LevinPeres2017,
  title     = {{Markov Chains and Mixing Times}},
  author    = {Levin, David A. and Peres, Yuval},
  year      = {2017},
  publisher = {American Mathematical Society},
  address   = {Providence, RI},
  edition   = {Second},
  isbn      = {978-1-4704-2962-1}
}

@article{gsm8k2021,
  title={{Training Verifiers to Solve Math Word Problems}},
  author={Cobbe, Karl and Kosaraju, Vineet and Bavarian, Mohammad and Chen, Mark and Jun, Heewoo and Kaiser, Lukasz and Plappert, Matthias and Tworek, Jerry and Hilton, Jacob and Nakano, Reiichiro and others},
  journal={arXiv preprint arXiv:2110.14168},
  year={2021}
}

@article{mbpp2021,
  title={{Program Synthesis with Large Language Models}},
  author={Austin, Jacob and Odena, Augustus and Nye, Maxwell and Bosma, Maarten and Michalewski, Henryk and Dohan, David and Jiang, Ellen and Cai, Carrie and Terry, Michael and Le, Quoc and others},
  journal={arXiv preprint arXiv:2108.07732},
  year={2021}
}

@article{humaneval2021,
  title={{Evaluating Large Language Models Trained on Code}},
  author={Chen, Mark and Tworek, Jerry and Jun, Heewoo and Yuan, Qiming and Pinto, Henrique Ponde De Oliveira and Kaplan, Jared and Edwards, Harri and Burda, Yuri and Joseph, Nicholas and Brockman, Greg and others},
  journal={arXiv preprint arXiv:2107.03374},
  year={2021}
}

@inproceedings{math5002023,
  title={{Let's Verify Step by Step}},
  author={Lightman, Hunter and Kosaraju, Vineet and Burda, Yuri and Edwards, Harrison and Baker, Bowen and Lee, Teddy and Leike, Jan and Schulman, John and Sutskever, Ilya and Cobbe, Karl},
  booktitle={The Twelfth International Conference on Learning Representations},
  year={2023}
}

@misc{eval-harness,
  author       = {Gao, Leo and Tow, Jonathan and Abbasi, Baber and Biderman, Stella and Black, Sid and DiPofi, Anthony and Foster, Charles and Golding, Laurence and Hsu, Jeffrey and Le Noac'h, Alain and Li, Haonan and McDonell, Kyle and Muennighoff, Niklas and Ociepa, Chris and Phang, Jason and Reynolds, Laria and Schoelkopf, Hailey and Skowron, Aviya and Sutawika, Lintang and Tang, Eric and Thite, Anish and Wang, Ben and Wang, Kevin and Zou, Andy},
  title        = {{The Language Model Evaluation Harness}},
  month        = 07,
  year         = 2024,
  publisher    = {Zenodo},
  version      = {v0.4.3},
  doi          = {10.5281/zenodo.12608602},
  url          = {https://zenodo.org/records/12608602}
}
\bibliographystyle{icml2025}

\clearpage
\newpage
\appendix
\onecolumn
\section{Proof of Theorem 4.1}
\label{sec:appendix}

\textbf{Notations.}
Let $\mathbf{x}=(\mathbf{x}_0,\ldots,\mathbf{x}_{n-1})$ denote a decoding trajectory produced by a dLLM and $\mathbf{x}^* = (\mathbf{x}^*_{0},\ldots,\mathbf{x}^*_{n-1})$ denote the trajectory produced by greedy decoding. Let $p(\cdot)$ and $q(\cdot)$ denote $P_{marginal}(\cdot)$ and $P_{joint}(\cdot)$. Finally, we use $\mathbf{x}^c$ to denote the complement of $\mathbf{x}$.

\begin{proof}\label{proof:1}
\textbf{Property (1)}: $\lim_{\epsilon \to 0} p(\mathbf{x}) = \lim_{\epsilon \to 0} q(\mathbf{x})$.

\noindent
\ul{Case 1: $\mathbf{x} = \mathbf{x}^*$.} 
In this case, the product of marginal PMFs $p(\mathbf{x})$ is:
\begin{equation*}
p(\mathbf{x})
= \prod\nolimits_{i=0}^{n-1} P(\mathbf{x}_i)
= \prod\nolimits_{i=0}^{n-1} (1-\epsilon_i).
\end{equation*}
For the joint PMF $q(\mathbf{x})$, using the union bound, we have:
\begin{equation*}
1 - n\epsilon 
\le 1 - \sum\nolimits_{i=0}^{n-1} P(\mathbf{x}_i^c)
\le 1 - P(\mathbf{x}^c) 
= q(\mathbf{x})
\le 1.
\end{equation*}
Therefore, when $\epsilon \to 0$, we can derive that both $p(\mathbf{x})$ and $q(\mathbf{x})$ converge to 1, i.e., $\lim_{\epsilon \rightarrow 0} p(\mathbf{x}) = \lim_{\epsilon \rightarrow 0} q(\mathbf{x}) = 1$.

\noindent
\ul{Case 2: $\mathbf{x} \ne \mathbf{x}^*$.} 
In this case, there must exist at least one position $0 \le k \le n-1$ such that $\mathbf{x}_k \ne \mathbf{x}_k^*$. For that position, we have:
\begin{equation*}
P(\mathbf{x}_k) \le 1 - P(\mathbf{x}^*_k) = \epsilon_k \le \epsilon.
\end{equation*}
Therefore, the product of marginal PMFs $p(\mathbf{x})$ satisfies:
\begin{equation*}
0
\le
p(\mathbf{x})
= \prod\nolimits_{i=0}^{n-1} P(\mathbf{x}_i)
= P(\mathbf{x}_k) \prod\nolimits_{i=0}^{k-1} P(\mathbf{x}_i) \prod\nolimits_{j=k+1}^{n-1} P(\mathbf{x}_j)
\le \epsilon.
\end{equation*}
For the joint PMF $q(\mathbf{x})$, using the monotonicity, we have:
\begin{equation*}
0
\le
q(\mathbf x)
= P(\mathbf{x}_0,\ldots,\mathbf{x}_{n-1})
\le P(\mathbf{x}_{k})
\le \epsilon.
\end{equation*}
Therefore, when $\epsilon \to 0$, we can derive that both $p(\mathbf{x})$ and $q(\mathbf{x})$ converge to 0, i.e., $\lim_{\epsilon \rightarrow 0} p(\mathbf{x}) = \lim_{\epsilon \rightarrow 0} q(\mathbf{x}) = 0$.

\noindent
\ul{Combining both cases}, $\lim_{\epsilon\to0} p(\mathbf{x}) = \lim_{\epsilon\to0} q(\mathbf{x})$ holds for any decoding trajectory $\mathbf{x}$. In other words, the product of marginal PMFs converges to the joint PMF when $\epsilon$ is sufficiently small.
\end{proof}

\begin{proof} \label{proof:2}
\textbf{Property (2)}: $\Delta_{TVD}\big(p(\mathbf{x}), q(\mathbf{x})\big) \le n\epsilon$.

\noindent
Following the definition of total variation distance~\cite{LevinPeres2017}, we have:
\begin{equation*}
\begin{split}
    \Delta_{TVD}\big(p(\mathbf{x}),q(\mathbf{x})\big) 
    &= \frac{1}{2} \sum\nolimits_{\mathbf{x}} |p(\mathbf{x}) - q(\mathbf{x})|\\
    &= \frac{1}{2} \big(|p(\mathbf{x}^*) - q(\mathbf{x}^*)| + \sum\nolimits_{\mathbf{x} \ne \mathbf{x}^*} |p(\mathbf{x}) - q(\mathbf{x})|\big)\\
    &= \frac{1}{2} \big(|p({\mathbf{x}^*}^c) - q({\mathbf{x}^*}^c)| + \sum\nolimits_{\mathbf{x} \ne \mathbf{x}^*} |p(\mathbf{x}) - q(\mathbf{x})|\big)\\
    &\le \frac{1}{2} \big(|p({\mathbf{x}^*}^c) - q({\mathbf{x}^*}^c)| + \sum\nolimits_{\mathbf{x} \ne \mathbf{x}^*} (p(\mathbf{x}) + q(\mathbf{x}))\big)\\
    &= \frac{1}{2} \big(|p({\mathbf{x}^*}^c) - q({\mathbf{x}^*}^c)| + p({\mathbf{x}^*}^c) + q({\mathbf{x}^*}^c)\big).
\end{split}    
\end{equation*}
If $p({\mathbf{x}^*}^c) \ge q({\mathbf{x}^*}^c)$, then $\Delta_{TVD}(p(\mathbf{x}),q(\mathbf{x})) \le p({\mathbf{x}^*}^c)$ holds. Otherwise, $\Delta_{TVD}(p(\mathbf{x}),q(\mathbf{x})) \le q({\mathbf{x}^*}^c)$ holds. Thus, $\Delta_{TVD}(p(\mathbf{x}),q(\mathbf{x})) \le \max \big(p({\mathbf{x}^*}^c), q({\mathbf{x}^*}^c)\big)$ holds. It can also be easily derived that $q({\mathbf{x}^*}^c)
\le \sum\nolimits_{i=0}^{n-1} P({\mathbf{x}_i^*}^c)
= \sum\nolimits_{i=0}^{n-1} \epsilon_i$. As for $p({\mathbf{x}^*}^c) 
= 1 - p(\mathbf{x}^*)
= 1 - \prod_{i=0}^{n-1}(1 - \epsilon_i)$, according to the Bernoulli inequality $\prod\nolimits_{i=0}^{n-1}(1 - \epsilon_i)
\ge 1 - \sum\nolimits_{i=0}^{n-1} \epsilon_i$, we can derive that $p({\mathbf{x}^*}^c) \le \sum_{i=0}^{n-1} \epsilon_i$. Taken together, we can derive that $\Delta_{TVD}(p(\mathbf{x}),q(\mathbf{x}))
\le \max\big(p({\mathbf{x}^*}^c),\, q({\mathbf{x}^*}^c)\big)
\le \sum_{i=0}^{n-1} \epsilon_i
\le n\epsilon$.
\end{proof}

\end{document}